\newtheorem{prop}{Proposition}
\newtheorem{lemma}{Lemma}
\newcommand{\cost}{\textrm{cost}}
\newcommand{\var}{\textrm{Var}}
\newcommand{\Var}[1]{\textrm{Var}\left( #1\right)}
\newcommand{\EE}[1]{\mathbb{E}\left[ #1 \right]}
\newcommand{\cov}{\textrm{Cov}}
\newcommand{\E}{\textrm{E}}
\newcommand{\real}{\mathbb{R}}
\title{Revise Saturated Activation Functions}
\author{Bing Xu, Ruitong Huang  \\
Department of Computing Science\\
University of Alberta\\
\texttt{\{antinucleon,rtonghuang\}@gmail.com} \\
\And
Mu Li \\
Computer Science Department \\
Carnegie Mellon University \\
\texttt{muli@cs.cmu.edu } \\
}
\newcommand{\cl}[1]{{(#1)}}
\begin{document}

\maketitle

\begin{abstract}
  In this paper, we revise two commonly used saturated functions, the
  logistic sigmoid and the hyperbolic tangent (tanh).
  We point out that, besides the well-known non-zero centered property, slope of the activation function near the origin is another possible
  reason making training deep networks with the logistic function
  difficult to train. We demonstrate that, with proper rescaling, the logistic sigmoid
  achieves comparable results with tanh.
  Then following the same argument, we improve tahn by penalizing in the negative part. 
  We show
  that ``penalized tanh'' is comparable and even outperforms the state-of-the-art
  non-saturated functions including ReLU and leaky ReLU on deep convolution
  neural networks.

  Our results contradict to the conclusion of previous works that the saturation
  property causes the slow convergence. It suggests further investigation is necessary to
  better understand activation functions in deep architectures.
\end{abstract}

\section{Introduction}
Activation functions play an important role in artificial neural networks in that they help bring non-linearity to the networks. 
Different activation functions can significantly affect the performance of a neural network, and therefore how to choose a good activation function has attracted lots of studies in the literature. 
One of the publicly accepted arguments is that a saturated activation function may cause the gradient vanishing (and/or explosion), and thus is less preferred.
In particular, it has been reported that the backpropagated gradient of a network which uses the logistic sigmoid $f(x) = 1/(1+e^{-x})$ as its activation function may vanish or explode quickly.
Currently there is no success, to our best knowledge, in training a deep neural network with this activation function without Batch Normalization\citep{ioffe2015batch}.
However, different from logistic sigmoid, the performances of networks with tanh, which is also saturated, are much more stable.
For example, a deep convolution neural network using tanh is able to reach a local optimality with careful Layer-sequential unit-variance weight initialization \citep{mishkin2015all}. 
It then raise a question what makes these two functions so different, despite that they are both saturated， by which our investigations in the paper are driven.

We start with verifying the assumptions that are required in Xavier initialization \citep{glorot2010understanding} for a general activation function in its linear regime, based on which we discuss the failure of logistic sigmoid and propose two methods to overcome the training problem of the deep Sigmoid networks.
%Our first result is to investigate the difference between these two functions.
%In particular, we start with verifying the assumptions that are required in Xavier initialization \citep{glorot2010understanding}, based on which two(????two????) methods are then proposed to overcome the training problem of the deep Sigmoid networks.
Our analysis suggests that besides the well-known non-zero centered property, slope and the offset of the activation function near the origin is another possible reason causing the vanishing (and/or explosion) of the gradient.

%We further compare the performances of tanh networks and ReLU networks. (((***ReLU literature review.***))) 

One well accepted explanation about the empirical success of ReLU is about its non-saturated property, compared to other saturated functions.
Similarly, we design new activation functions to investigate the essential effects of being leaky, its linear regime, and its saturation property outside the linear regime of an activation function.
In particular, we compare the performance of a new activation function, called leaky tanh, to the performances of ReLU and leaky ReLU.
The new function penalized tanh shares similar property in its linear regime with leaky ReLU, yet different from ReLU or leaky ReLU, it is saturated outside its linear regime.
%Following the same idea, we then investigate the claim that the non-saturation property is the reason that ReLU outperforms other saturated functions.
%We test the performance of a newly propose activation function, called leaky tanh, which is saturated but similar to leaky ReLU. 
Our results provide more insights about the effect of different activation functions on the performance of the neural networks, and suggest that further investigation is still needed for better understanding.

\if0
One type of well
studied activation function is the saturated functions, including the logistic
sigmoid, $1/(1+e^{-x})$, and the hyperbolic tangent, tanh$(x)$.
It has been observed that with the logistic sigmoid, the
backpropagated gradient easily either shrink rapidly or grow out of bounds. To our
best knowledge, there is no success to train a deep neural network with the
logistic function.
The tanh, on the other hand, performs more stably than the former. For example,
recently \cite{mishkin2015all} shown that deep convolution neural network with
tanh is able to reach a local optimality with careful Layer-sequential
unit-variance weight initialization.
 
TODO.

It has been believe that

\citep{glorot2010understanding},

However, tanh still converges more slowly
comparing to non-saturated functions such as ReLU, namely max$(0,x)$.

One publicly accepted explanation of this failure is the gradient vanishing (and/or explosion) that is happening with saturated activation functions.
Based on the explanation, Layer-wise pretrain \citep{hinton2006reducing, bengio2007greedy} or Batch Normalization \citep{ioffe2015batch} can be used as an efficient way of tackling this saturation problem.
Another possible way is to use a non-saturated activation function, like ReLU. This non-saturation property is also as an explanation of the better performance of ReLU compared to other saturated functions.

In this paper, we re-investigate the above claims: 1. Gradient vanishing (and/or explosion) causes the failure when using the saturated activation functions; 2. The non-saturation property is the reason that ReLU outperforms other saturated functions.
In particular, we start with verifying the assumptions that are required in Xavier initialization \citep{glorot2010understanding}, then based on which two methods are proposed to overcome the training problem of the deep Sigmoid networks.
To verify the second claim, we test the performance of a newly proposed saturated activation function, called leaky Tanh.
Our results provides more insights about the effect of different activation functions on the performance of the neural networks, and suggest that further investigation is still needed for better understanding.
\fi
%Another well accepted explanation of why ReLU works better than saturated activation function is ReLU is non-saturated. Non-saturated activation functions reduce chance of gradient vanishing. % cite?

%In this paper, we first solve the problem of training deep sigmoid network from random initialization; Then we demonstrate a new activation: Leaky Tanh, which is saturated, but outperforms ReLU.

All the networks in the paper are trained by using MXNet \citep{chen2015mxnet}.

\section{Why Training Deep Neural Networks is Hard with the Logistic Sigmoid }
We first investigate the behaviors of the activation variance and the gradient variance,
for a general activation function in its linear regime, within the theoretical framework developed in \citep{glorot2010understanding}.
Our analysis is focusing on the initialization stage.
Basis on the analysis, conditions that are required for the activation function to maintain the activation variance and the gradient variance are presented, 
which then leads to the discussion about the difficulty of training a deep neural networks using the logistic sigmoid. 
%We first investigate the logistic sigmoid function in the
%theoretical framework developed by \cite{glorot2010understanding}. The main idea
%is that both the forward output and backward gradient
%should neither decay nor explode too quickly. One way to characterize this
%property is to make their variance be equal.

To simplify the analysis, we assume the following fully connected neural network. 
Let $y^{(l)} \in \mathbb{R}^{n_l}$ be the output of layer $l$ for $l = 1,2,\ldots, L$, and $f$
be the activation function, in a forward pass we have
\begin{align}
x^\cl{l} &= W^\cl{l-1}y^\cl{l-1}+b^\cl{l-1} \\
y^\cl{l} &= f(x^\cl{l}) \\
\end{align}
where the weight $W^{(l-1)} \in \mathbb{R}^{n_{l} \times n_{l-1}}$ and the bias $b^{\cl{l-1}}\in \mathbb{R}^{n_{l}}$.
%\begin{align}
%%x^\cl{l} &= f(y^\cl{l-1}) \\
%y^\cl{l} &= W^\cl{l}x^\cl{l} + b^\cl{l},
%\end{align}
%where the weight $W^{(l)} \in \mathbb{R}^{n_l \times n_{l-1}}$ and the bias $b^{(l)}\in \mathbb{R}^{n_l}$.

Assume that we initialized  $b^\cl{l-1}$ to be 0.
Thus the randomness of $y^\cl{l}$ comes from both the output of previous layer $y^{\cl{l-1}}$ and the weight $W^{\cl{l-1}}$ which is randomly initialized.
%The randomness come from both the input $y^{(0)}$, namely the output of the data layer, and the weight $W^{(l)}$, which is often randomly initialized. 
Further assume that for each $W^{\cl{l}}$, its elements are initialized independently with $\EE{W^\cl{l}} = 0$ with equal variance $\sigma_l^2$.  
Also assume that all the weights $\{W^\cl{l};\, l=0,\ldots,L-1\}$ are mutually independent.

%We assume the elements of $y^{(0)}$ are i.i.d., and denote by $\var[y^{(0)}]$ the
%identical variance of $y^{(0)}_i$ for $i = 1, \ldots, n_0$, and similarly
%for $\var{[W^{(l)}]}$.
\begin{prop}
\label{prop:var}
Assume that $\frac{\partial \cost}{\partial y^\cl{L}}$ is independent to $W^\cl{t}$, $t = 1,\ldots,L-1$, with $\Var{\frac{\partial \cost}{\partial y^\cl{L}}} = d_LI$ for some constant $d_L$. Also assume that  $\EE{\frac{\partial \cost}{\partial y^\cl{L}}} = 0$.
Let the activation function $f(x) = \alpha x + \beta$, then for any layer $l$,
 \begin{align}
\Var{y^\cl{l}} & = \alpha^2n_{l-1}\sigma_{l-1}^2\left( \Var{y^\cl{l-1}} +\beta^2 I_{n_l}\right). \\
\Var{\frac{\partial \cost}{\partial y^\cl{l-1}}} & = \alpha^2n_l\sigma_{l-1}^2\Var{\frac{\partial \cost}{\partial y^\cl{l}}}.
\end{align}  
\end{prop}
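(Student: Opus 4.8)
The plan is to prove both recursions by expanding the forward and backward maps componentwise, exploiting that in the linear regime $f(x)=\alpha x+\beta$ the Jacobian of each layer is simply $\alpha W^\cl{l-1}$, and then repeatedly invoking the following elementary fact: if $\{a_k\}$ are mean-zero with common variance $\sigma^2$, mutually independent, and independent of the (possibly dependent) random variables $\{b_k\}$, then $\Var{\sum_k a_k b_k}=\sigma^2\sum_k \EE{b_k^2}$. This holds because every cross term $\cov(a_j b_j,\,a_k b_k)$ with $j\neq k$ vanishes (the mean-zero factor $a_j$ factors out by independence), while each diagonal term equals $\EE{a_k^2}\EE{b_k^2}=\sigma^2\EE{b_k^2}$ since $\EE{a_k b_k}=0$.

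For the forward recursion I would first record the first moment. Writing $y^\cl{l}_i=\alpha\sum_j W^\cl{l-1}_{ij}y^\cl{l-1}_j+\beta$ (using $b^\cl{l-1}=0$) and combining $\EE{W^\cl{l-1}_{ij}}=0$ with the independence of $W^\cl{l-1}$ from $y^\cl{l-1}$, an easy induction gives $\EE{y^\cl{l}_i}=\beta$ for every $l\ge 1$. Applying the elementary fact with $a_k=W^\cl{l-1}_{ij}$ and $b_k=y^\cl{l-1}_j$ then yields $\Var{y^\cl{l}_i}=\alpha^2\sigma_{l-1}^2\sum_j\EE{(y^\cl{l-1}_j)^2}$. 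The key bookkeeping step is to split the second moment as $\EE{(y^\cl{l-1}_j)^2}=\Var{y^\cl{l-1}_j}+\beta^2$, which is exactly where the additive $\beta^2$ term in the statement originates. Treating the per-component variance as common across a layer so that $\Var{\cdot}$ is isotropic, summing over the $n_{l-1}$ indices gives the first displayed identity.

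The backward recursion is structurally identical but cleaner, as the offset $\beta$ drops out. Write $g^\cl{l}:=\frac{\partial\cost}{\partial y^\cl{l}}$; differentiating through the linear layers gives $g^\cl{l-1}_j=\alpha\sum_i W^\cl{l-1}_{ij}\,g^\cl{l}_i$. Because the activation is linear and the biases vanish, $g^\cl{l}$ is a function only of $g^\cl{L}$ and of the weights $W^\cl{l},\ldots,W^\cl{L-1}$, hence is independent of $W^\cl{l-1}$; combined with the hypotheses $\EE{g^\cl{L}}=0$ and $\Var{g^\cl{L}}=d_L I$, a downward induction shows $\EE{g^\cl{l}}=0$ with isotropic covariance. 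Applying the elementary fact again (now with $a_k=W^\cl{l-1}_{ij}$ and $b_k=g^\cl{l}_i$) and using $\EE{(g^\cl{l}_i)^2}=\Var{g^\cl{l}_i}$, with no $\beta$ correction this time, gives $\Var{g^\cl{l-1}}=\alpha^2\sigma_{l-1}^2\sum_{i=1}^{n_l}\Var{g^\cl{l}_i}=\alpha^2 n_l\sigma_{l-1}^2\Var{g^\cl{l}}$, which is the second identity.

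I expect the main obstacle to be justifying the independence and isotropy hypotheses rather than the algebra. Specifically, I need the propagated independence of $g^\cl{l}$ from $W^\cl{l-1}$ — which, unlike in the general nonlinear analysis of \citet{glorot2010understanding} where it is only an approximation, is \emph{exact} here thanks to linearity and zero biases — and I need the symmetry argument letting the covariance matrices be written as scalar multiples of the identity, so that the vector statements collapse to the scalar recursions above. Verifying that the off-diagonal covariances really vanish at each layer, and that the common-variance (isotropy) property is preserved by both recursions, is the step that most warrants care.
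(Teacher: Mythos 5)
Your proposal is correct and follows essentially the same route as the paper: your ``elementary fact'' is precisely the componentwise form of the paper's Lemma~\ref{lem:algCal}, your split $\EE{(y^\cl{l-1}_j)^2}=\Var{y^\cl{l-1}_j}+\beta^2$ mirrors the paper's decomposition $\EE{yy^\top}=\Var{y}+\EE{y}\EE{y}^\top$ (the source of the $\beta^2$ term), and the induction establishing zero off-diagonal covariance and a common per-coordinate variance is exactly the paper's induction that $\Var{y^\cl{l}}=c_lI$ and $\Var{\partial\cost/\partial y^\cl{l}}=d_lI$. The isotropy step you flag as delicate goes through immediately, just as in the paper, because $\EE{W_{ij}W_{i'j'}}=0$ for $i\neq i'$ kills the off-diagonal terms and the diagonal expression is independent of $i$.
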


The proof is formally developed in Appendix~\ref{sec:appendix}, following the idea of \citet{glorot2010understanding}.
Assume that the network is initialized using Xavier initialization, then to maintain the activation variance and the gradient variance, namely for $l = 1, \ldots, L$,
\begin{align*}
& \Var{y^{(l)}} = \Var{y^{(l-1)}}
\quad\textrm{and}\quad
\Var{\frac{\partial \textrm{cost}}{\partial y^{(l)}}} =
\Var{\frac{\partial \textrm{cost}}{\partial y^{(l-1)}}};
\\
& n_l\sigma_{l-1}^2 \approxeq 1 
\quad\textrm{and}\quad 
n_{l-1}\sigma_{l-1}^2 \approxeq 1;
\end{align*}
$\alpha$ and $\beta$ must satisfy that
\[
\alpha = 1 \quad \text{and} \quad \beta = 0.
\]
\if0
\begin{prop}
  Assume the elements of $y^{(0)}$ are i.i.d.~with zero mean, and same for
  $W^{(l)}$. Let the activation function $f(x) = \alpha x + \beta$, then for any layer $l$,
  \begin{align}
    \var[y^{(l)}] &= \alpha n_{l-1} \var[W^{(l)}] \left( \var [y^{(l-1)}] +
      \frac\beta \alpha\right) \label{eq:var_y}\\
    \var\left[\frac{\partial \textrm{cost}}{\partial y^{(l-1)}}\right] &= \alpha n_{l}
    \var[W^{(l)}] \var\left[\frac{\partial \textrm{cost}}{\partial
        y^{(l)}}\right] \label{eq:var_gy}
  \end{align}
  \label{prop:var}
\end{prop}
The proof, which follows \cite{glorot2010understanding}, is shown in
Appendix~\ref{sec:appendix}. If all layers have the same output size $n$, then to make all variance
be equal, namely
\begin{equation}
\var[y^{(l)}] = \var[y^{(l-1)}]
\quad\textrm{and}\quad
\var\left[\frac{\partial \textrm{cost}}{\partial y^{(l)}}\right] =
\var\left[\frac{\partial \textrm{cost}}{\partial y^{(l-1)}}\right]
\quad \textrm{for } l = 1, \ldots,
\end{equation}
we have
\begin{equation}
 \var[W^{(l)}]  = \frac1{\alpha n}
\quad\textrm{and}\quad
\beta = 0
\end{equation}
\fi

Now consider the Taylor expansions of different activation functions,
\begin{align}
\mbox{sigmoid}(x) & = \frac{1}{2} + \frac{x}{4} - \frac{x^3}{48} + O(x^5)\\
\mbox{tanh}(x) & = 0 +  x - \frac{x^3}{3} + O(x^5)\\
\mbox{relu}(x) & = 0 + x \quad\textrm{for } x \ge 0.
\end{align}
It shows that in their linear regimes, both tanh and relu have the desirable property that $\alpha=1$ and $\beta = 0$. 
%both tanh and relu approximate well the linear activation function with $\alpha = 1$ and $\beta = 0$. 
If the weight is initialized with zero mean and $1/n$
variance, which is one of the widely used method \cite{glorot2010understanding},
then both the forward output and backward gradient will be in proper range at
least for the first few iterations. 
However, this is not true for logistic sigmoid. 
First, its slope in the linear regime is $1/4$ rather
than 1, then we need to initialize the weight is a 16 times smaller variance to
keep the each layer's gradient variance the same. Second, it has a non-zero
mean, which makes the output variance increase linear with the layer.

One simply way to fix this problem is rescale the logistic sigmoid to match the
first two degree coefficient with both tanh and relu, so the weight
initialization method used for the latter two can be applied to the logistic
sigmoid. In other word, we transform the logistic sigmoid by

\begin{equation}
\textrm{scaled sigmoid}(x) = 4 \times \textrm{sigmoid}(x) - 2 = \frac{4}{1+e^{-x}} - 2
\end{equation}

The scaled logisitc sigmoid is illustrated in Figure~\ref{fig:sigmoid}. As can
bee seen, it is similar to tanh near 0, but the saturation value is two times
larger than tanh.
Our experimental results shows that the scaled sigmoid
 function achievess comparable results with tanh.
 
In the other perspective, the scale factor 4 in scaled sigmoid function is equivalent to scale initialization and learning rate by factor of 4; the bias term -2 in scaled sigmoid function is equivalent to fixed bias after linear transform.  

\begin{figure}[th!]
  \centering
  \includegraphics[width=.5\textwidth]{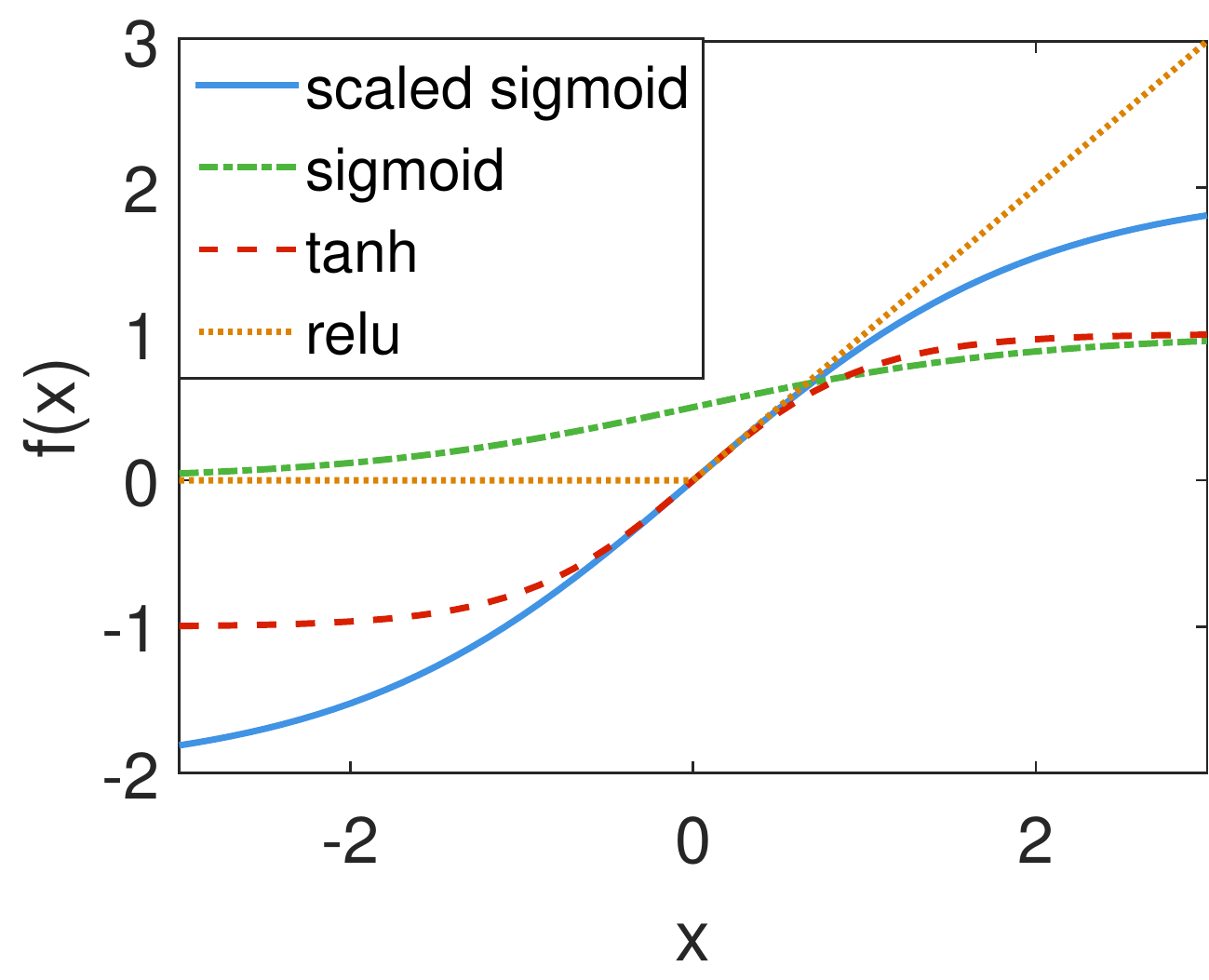}
  \caption{Compare the scaled logistic sigmoid with other activation functions.\label{fig:sigmoid}}
\end{figure}

\section{Penalized Saturated Activation Functions}
Recently, several variants have been proposed in the literature to improve the performance of non-saturated activation functions.
One that we are particularly interested in is the `leaky ReLU', which is as follows.

%Recently, there are several variants proposed to improve non-saturated 
%activation functions. We are most interested in one function called leaky
%ReLU, which has the form
\begin{equation}
  f(x) =
\begin{cases}
  x & \textrm{if } x > 0\\
  ax & \textrm{otherwise}
\end{cases}，
\end{equation}
where $a\in(0,1)$.
Comparing to the standard ReLU, leaky ReLU gives nonzero gradient for
negative value. 
Although its output is no longer sparse, which is claimed to be the main advantage of ReLU, recent works show that oftentimes leaky ReLU outperforms ReLU \citep{xu2015empirical, clevert2015fast}.
%Though it no longer outputs sparse output, which is claimed as
%one of the main advantage of ReLU, recent works show that leaky ReLU outperforms
%the standard ReLU empirically \citep{xu2015empirical, clevert2015fast}.

On the other viewpoint, the leaky ReLU can be viewed as an improvement over the
simple identical activation function $f(x) = x$ which penalizes the gradient of
the negative part. Inspired by this observation, we propose to also penalize the
negative part of the saturated activation functions.
%For example, 
In particular, we propose
``penalized tanh'' which takes the form
\begin{equation}
  f(x) =
\begin{cases}
  \textrm{tanh}(x)  & \textrm{if } x > 0\\
  a \cdot \textrm{tanh}(x) & \textrm{otherwise}
\end{cases}
\end{equation}
where $a\in(0,1)$.
Figure~\ref{fig:tanh} compares the penalized tanh with other activation
functions. As can be seen, if the same $a$ is used, the penalized tanh can be
viewed as a saturated version of leaky ReLU. 
These two functions have similar value near 0, since both function share the same Taylor expansion up to the first order.
%They output similar values near 0, which is due to they share the same Taylor coefficients at the first two degrees. 
But different to leaky Relu, penalized tanh saturates to $-a$ and $1$
when moving away from 0.

\begin{figure}[th!]
  \centering
  \includegraphics[width=.5\textwidth]{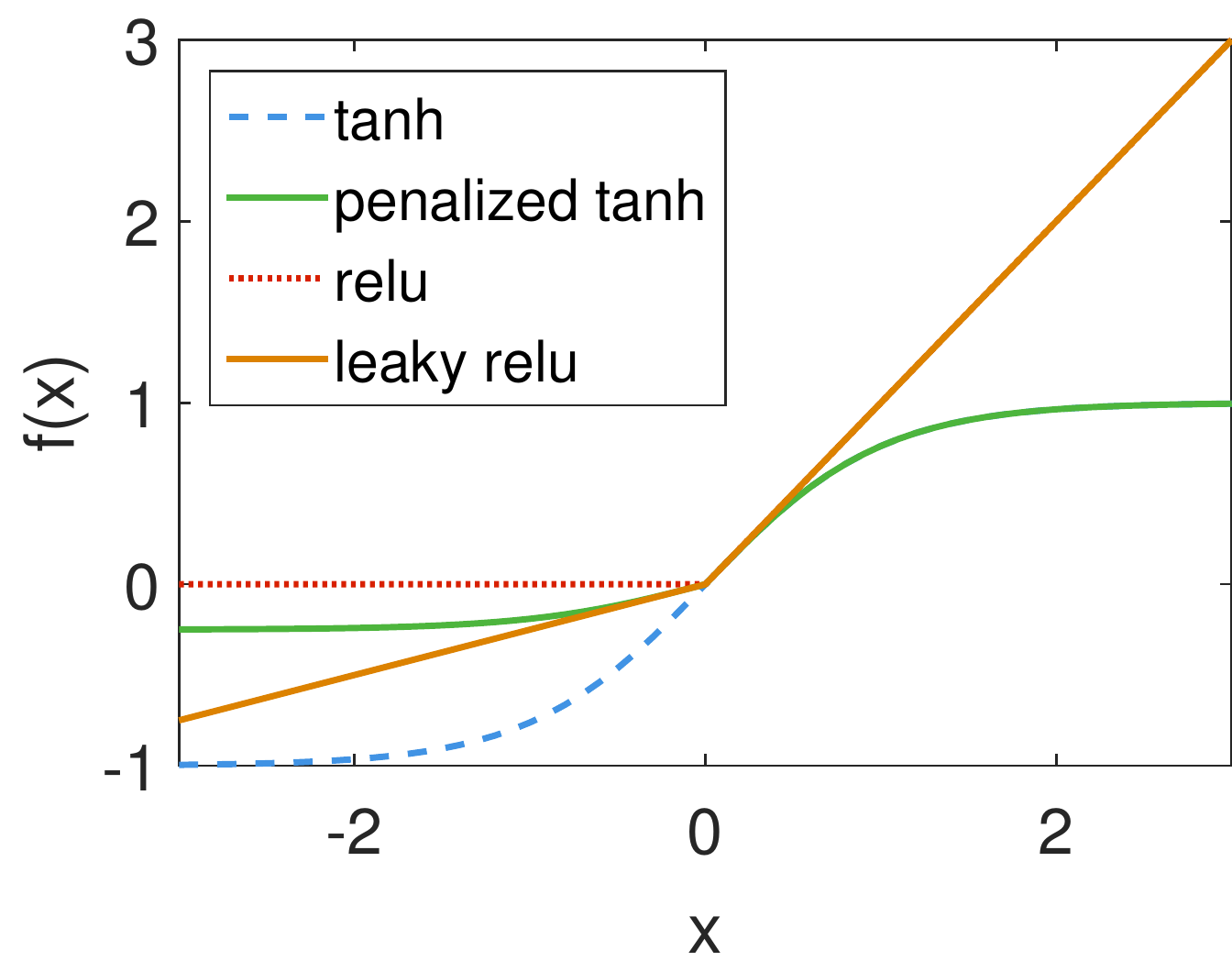}
  \caption{Compare the penalized tanh with other functions. Both penalized tanh
    and leaky ReLU uses $a=1/4$.}
  \label{fig:tanh}
\end{figure}

%\newpage
\section{Experiment}

\begin{table}[htb!]
    \center
   \begin{tabular}{|l|l|l|}
   \hline
   Activation            & Train Accuracy & Test Accuracy \\ \hline
   sigmoid               & diverged               & diverged          \\  %\hline
   scaled sigmoid              & 89.39\%        & 59.11\%       \\ %\hline
   tanh                  & 96.94\%        & 61.99\%       \\ %\hline
   ReLU                  & 99.17\%        & 67.91\%       \\ %\hline
   penalized tanh (a = 0.25) & 99.75\%        & 70.43\%       \\ %\hline
   leaky ReLU (a = 0.25) & 99.85\%        & 70.64\%       \\ \hline
   %ELU (a = 0.25)        & 99.92\%        & 72.13\%       \\ \hline
   %ELU (a = 1)           & N/A            & N/A           \\ \hline
   \end{tabular}
   \caption{Vary activation function for inception network on CIFAR-100}
   \label{tab:cifar100}
\end{table}

\begin{figure}[th!]
  \centering
  \begin{subfigure}[b]{0.5\textwidth}
  \includegraphics[width=\textwidth]{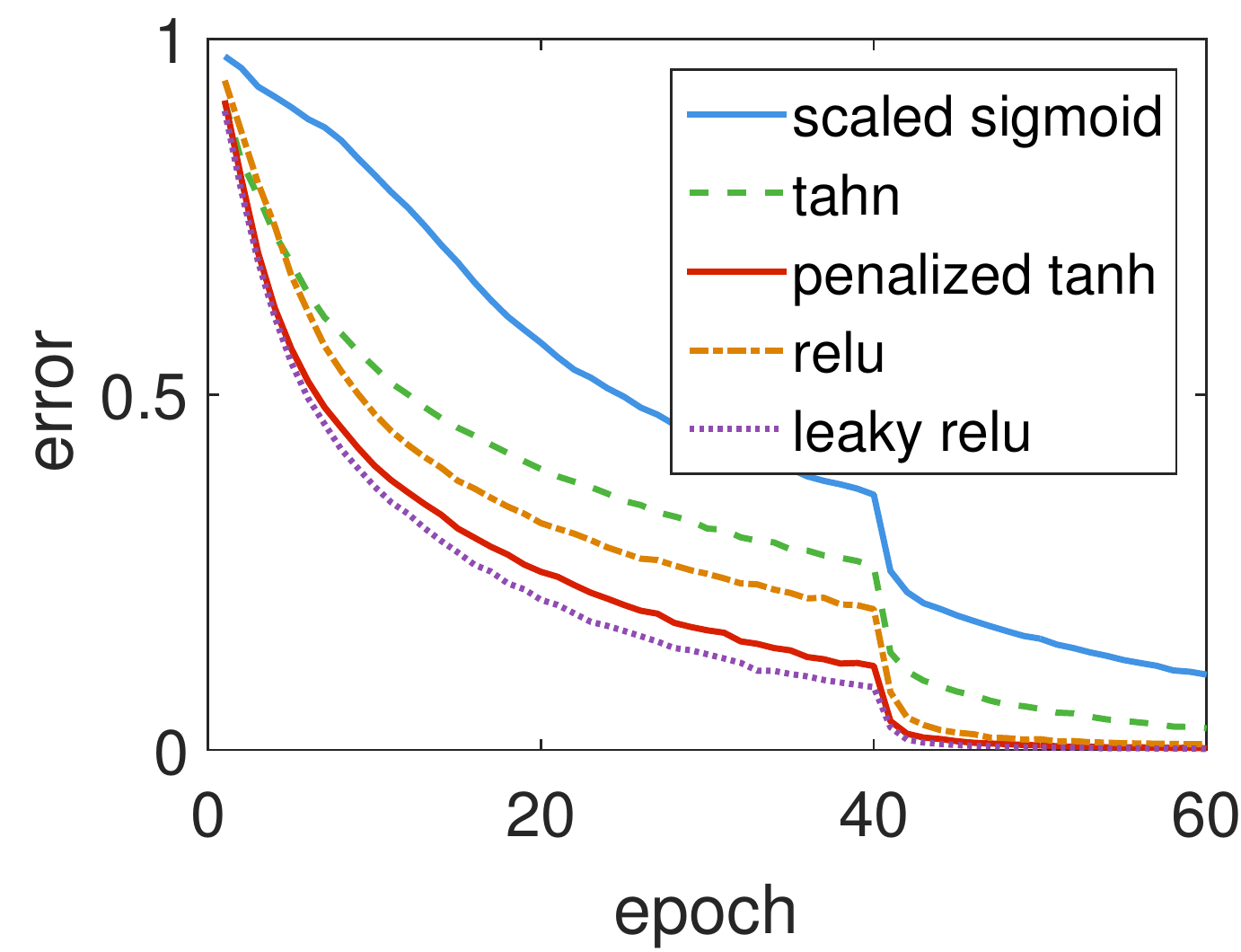}
  \caption{Train}
  \end{subfigure}%
  \begin{subfigure}[b]{0.5\textwidth}
  \includegraphics[width=\textwidth]{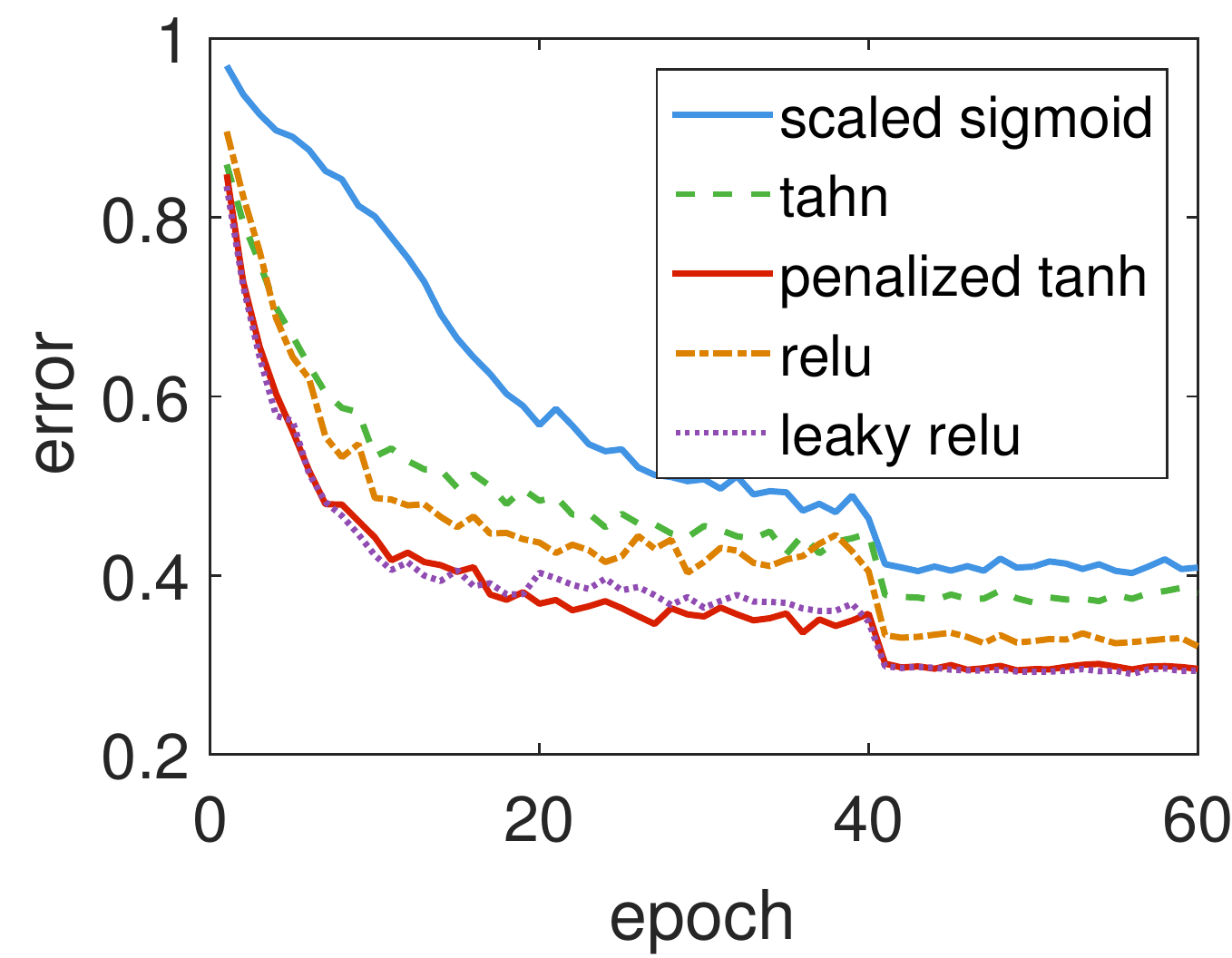}
  \caption{Test}
  \end{subfigure}
  \caption{Error versus epoch for inception network with various activation function on CIFAR-100. }
  \label{fig:conv}
\end{figure}

We evaluated the proposed two activation functions on a 33-layers inception
network without batch normalization \citep{ioffe2015batch}. We used the CIFAR-100 dataset, which is an image classification dataset with 100 classes. All networks are initalized with \citep{glorot2010understanding}.

The converge results are shown in Figure~\ref{fig:conv} and the final training
and test accuracy is reported in Table~\ref{tab:cifar100}.
As expected, ReLU converges faster than tanh but is outperformed
by leaky ReLU.
For the proposed functions, the scaled logistic sigmoid successfully converges
into a local minimal. Its performance is close its cousin tanh, both of them are
saturated and have
have similar shape near 0. On the other hand, the
penalized tanh converges more than 2 times faster than the standard tanh. It
gives almost identical results with its non-saturated version, namely leaky ReLU.

Based on the observations, it seems that the performance of various activation
functions on the inception network mainly depend on the
function shape near 0, namely the values of $f(0)$ and $\partial
f(0)$. It is possibly due to the inputs to the activation function are near 0.

\section{Conclusion \& Future Work}

%%%%%%%%%%%%%%TODO.

The result of this paper is two-fold.
We first attempt to  explain and fix the failure of training a deep Sigmoid network, based on the idea of the work \citep{glorot2010understanding}.
A re-scaled  Sigmoid activation is proposed in the paper to make deep Sigmoid network trainable.
The other result of this paper is to investigate the differences in network performances between using saturated activation function and using non-saturated ones.
Our result suggests that when using penalization on negative part, saturation of the activation function is comparable to ReLU and Leaky ReLU.
There are still many open questions requiring further investigation: 1. How to efficiently determine different learning rates for different layers in a very deep neural network? 2. How does the positive part (on $[0, +\infty)$) and the negative part (on $(-\infty, 0]$) of the activation function affect the performance of the network?
%In this paper, we proposed two ways to make deep sigmoid network converge and a new leaky saturated activation which outperforms widely used non-saturated ReLU function. However, it raised three interesting question: 1. How to determine learning rate in a very deep neural network; 2. The activation function effect at $0^+$ and $0^-$; 3. How to quantitative measure activation complexity. These questions are still open and need to be investigated in the future.

\subsubsection*{Acknowledgments}
The authors would like to thank Tianqi Chen, Ian Goodfellow for discussion and NVIDIA for hardware support. 

\bibliography{saturated}

\begin{thebibliography}{6}
\providecommand{\natexlab}[1]{#1}
\providecommand{\url}[1]{\texttt{#1}}
\expandafter\ifx\csname urlstyle\endcsname\relax
  \providecommand{\doi}[1]{doi: #1}\else
  \providecommand{\doi}{doi: \begingroup \urlstyle{rm}\Url}\fi

\bibitem[Chen et~al.(2015)Chen, Li, Li, Lin, Wang, Wang, Xiao, Xu, Zhang, and
  Zhang]{chen2015mxnet}
Tianqi Chen, Mu~Li, Yutian Li, Min Lin, Naiyan Wang, Minjie Wang, Tianjun Xiao,
  Bing Xu, Chiyuan Zhang, and Zheng Zhang.
\newblock Mxnet: A flexible and efficient machine learning library for
  heterogeneous distributed systems.
\newblock \emph{arXiv preprint arXiv:1512.01274}, 2015.

\bibitem[Clevert et~al.(2015)Clevert, Unterthiner, and
  Hochreiter]{clevert2015fast}
Djork-Arn{\'e} Clevert, Thomas Unterthiner, and Sepp Hochreiter.
\newblock Fast and accurate deep network learning by exponential linear units
  (elus).
\newblock \emph{arXiv preprint arXiv:1511.07289}, 2015.

\bibitem[Glorot \& Bengio(2010)Glorot and Bengio]{glorot2010understanding}
Xavier Glorot and Yoshua Bengio.
\newblock Understanding the difficulty of training deep feedforward neural
  networks.
\newblock In \emph{International conference on artificial intelligence and
  statistics}, pp.\  249--256, 2010.

\bibitem[Ioffe \& Szegedy(2015)Ioffe and Szegedy]{ioffe2015batch}
Sergey Ioffe and Christian Szegedy.
\newblock Batch normalization: Accelerating deep network training by reducing
  internal covariate shift.
\newblock \emph{arXiv preprint arXiv:1502.03167}, 2015.

\bibitem[Mishkin \& Matas(2015)Mishkin and Matas]{mishkin2015all}
Dmytro Mishkin and Jiri Matas.
\newblock All you need is a good init.
\newblock \emph{arXiv preprint arXiv:1511.06422}, 2015.

\bibitem[Xu et~al.(2015)Xu, Wang, Chen, and Li]{xu2015empirical}
Bing Xu, Naiyan Wang, Tianqi Chen, and Mu~Li.
\newblock Empirical evaluation of rectified activations in convolutional
  network.
\newblock \emph{arXiv preprint arXiv:1505.00853}, 2015.

\end{thebibliography}
\bibliographystyle{iclr2016_workshop}

\section*{Appendix}
\label{sec:appendix}
\begin{proof}[Proof of Proposition~\ref{prop:var}.]
First note that for each $l=1\ldots, L$, 
\[
\EE{x^\cl{l}} = \EE{W^\cl{l-1}y^\cl{l-1}+b^\cl{l-1}} = \EE{W^\cl{l-1}}\EE{y^\cl{l-1}}+0 = 0,
\]
where the last equality is due to $\EE{W^\cl{l-1}} = 0$. Therefore,
\[
\EE{y^\cl{l}} = \EE{f(x^\cl{l})} = \EE{\alpha x^\cl{l} + \beta \bf{1}} = \alpha \EE{x^\cl{l}} + \beta\bf{1}  =\beta\bf{1}
\]

To compute the variance of $y^\cl{l}$, by definition
\[
\Var{y^\cl{l}} = \Var{\alpha x^\cl{l} + \beta\bf{1}} = \alpha^2\Var{x^\cl{l}}.
\]
Note that  $y^{\cl{l-1}}$ only depends on the weights $W^{0}, \ldots, W^\cl{l-2}$, thus is independent to $W^\cl{l-1}$.
plugging the definition of $x^\cl{l}$,
\begin{align*}
\Var{x^\cl{l}} & = \Var{W^\cl{l-1}y^\cl{l-1}} \\
& = \EE{W^\cl{l-1}y^\cl{l-1}y^{\cl{l-1}\top}W^{\cl{l-1}\top}} - \EE{W^\cl{l-1}y^\cl{l-1}} \EE{W^\cl{l-1}y^\cl{l-1}}^\top \\
& = \EE{W^\cl{l-1}y^\cl{l-1}y^{\cl{l-1}\top}W^{\cl{l-1}\top}} - \EE{W^\cl{l-1}}\EE{y^\cl{l-1}} \EE{y^\cl{l-1}}^\top\EE{W^\cl{l-1}}^\top \\
& = \EE{W^\cl{l-1}y^\cl{l-1}y^{\cl{l-1}\top}W^{\cl{l-1}\top}},
\end{align*}
where the third equality is due to the independence of $y^\cl{l-1}$ and $W^\cl{l-1}$, and the last equality is due to $\EE{W^\cl{l-1}} = 0$.

We will prove that the covariance matrix of $y^\cl{l}$ is $c_lI$ for some constant $c_l$, $l=1,\ldots,L$, using mathematical induction. 
For the base case $y^\cl{0}$ being the input, by assumption the claim holds.
Now assume that the claim holds for $l-1$, i.e. $\Var{y^\cl{l-1}} = c_{l-1}I$, we would like to prove that $\Var{y^\cl{l}} = c_lI$ for some constant $c_l$. To simplify the proof, we would use Lemma \ref{lem:algCal} as follows.
\begin{lemma}
\label{lem:algCal}
Given that $\Var{y} = C I_n$ and $y$ is independent to $W \in \real^{m\times n}$, if the elements of $W$ are mutually independent with common variance $\sigma^2$, then 
\[\EE{Wyy^\top W^\top} = (Cn\sigma^2+ \sigma^2\|\EE{y}\|_2^2)I_m.\]
\end{lemma}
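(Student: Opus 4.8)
The plan is to verify the matrix identity entrywise, reducing everything to a scalar computation in which the independence structure of $W$ does all the work. I would fix two row indices $i,k \in \{1,\ldots,m\}$ and write out the $(i,k)$ entry of the random matrix $Wyy^\top W^\top$ as the double sum
\[
\bigl(Wyy^\top W^\top\bigr)_{ik} = \sum_{j=1}^n\sum_{l=1}^n W_{ij}\,y_j\,y_l\,W_{kl}.
\]
Taking expectations and invoking the independence of $W$ and $y$, each summand factors as $\EE{W_{ij}W_{kl}}\,\EE{y_j y_l}$, so the whole computation splits into a "weight part'' and a "signal part.''

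The key step is evaluating $\EE{W_{ij}W_{kl}}$. Since the entries of $W$ are mutually independent with zero mean and common variance $\sigma^2$, this expectation equals $\sigma^2$ exactly when $(i,j)=(k,l)$ and vanishes otherwise; that is, $\EE{W_{ij}W_{kl}} = \sigma^2\,\delta_{ik}\delta_{jl}$. This is what collapses the double sum to its diagonal: only the terms with $i=k$ and $j=l$ survive, leaving $\delta_{ik}\,\sigma^2\sum_{j}\EE{y_j^2}$. The signal part is then handled by the standard decomposition $\EE{y_j^2} = \Var{y_j} + (\EE{y_j})^2$. Using the hypothesis $\Var{y}=CI_n$, which forces $\Var{y_j}=C$ for every $j$, the sum becomes $\sum_j\bigl(C + (\EE{y_j})^2\bigr) = Cn + \|\EE{y}\|_2^2$. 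Assembling the pieces gives $\EE{(Wyy^\top W^\top)_{ik}} = \delta_{ik}\,\sigma^2\bigl(Cn + \|\EE{y}\|_2^2\bigr)$, which is precisely the claimed scalar multiple of $I_m$.

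The main obstacle is not computational but one of bookkeeping: one must be careful that the zero-mean assumption on $W$ is genuinely used, even though the lemma statement only lists the common variance $\sigma^2$. If the entries of $W$ had a nonzero common mean $\mu$, then for $(i,j)\neq(k,l)$ one would instead get $\EE{W_{ij}W_{kl}}=\mu^2$, contributing an extra off-diagonal term $\mu^2\sum_{j,l}\EE{y_j y_l}$ that destroys the clean $I_m$ structure. So in writing the proof I would state explicitly that the entries of $W$ are centered (the ambient assumption $\EE{W^{(l)}}=0$ from the surrounding setup), since that is exactly the hypothesis that annihilates every cross term and makes the diagonal reduction valid. The remaining manipulations — factoring the expectation and splitting second moment into variance plus squared mean — are routine.
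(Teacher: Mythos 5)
Your proof is correct and follows essentially the same route as the paper's: both reduce the claim to an entrywise computation in which mutual independence of the entries of $W$ annihilates all off-diagonal terms, and both split the second moment of $y$ into variance plus squared mean — the paper merely performs that split at the matrix level ($\EE{yy^\top} = \Var{y} + \EE{y}\EE{y}^\top$) before going to coordinates, whereas you go to coordinates first. Your explicit remark that the zero-mean assumption $\EE{W}=0$ is genuinely needed, even though it appears only in the ambient setup and not in the lemma statement, is accurate; the paper's proof relies on it in exactly the same place, when evaluating $\EE{W_{i:}W_{j:}^\top}$, without flagging it.
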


Thus, 
\begin{align*}
\Var{x^\cl{l}} & = \EE{W^\cl{l-1}y^\cl{l-1}y^{\cl{l-1}\top}W^{\cl{l-1}\top}} \\
& = (c_{l-1}n_{l-1}\sigma_{l-1}^2+ \sigma_{l-1}^2\|\EE{y}\|_2^2)I_{n_l} \\
& = n_{l-1}(c_{l-1}+\beta^2)\sigma_{l-1}^2I_{n_l}.
\end{align*}
Picking $c_l =  n_{l-1}(c_{l-1}+\beta^2)\sigma_{l-1}^2$ and the claim holds. Therefore,
\[
\Var{y^\cl{l}} = \alpha^2n_{l-1}(c_{l-1}+\beta^2)\sigma_{l-1}^2I_{n_l} = \alpha^2n_{l-1}\sigma_{l-1}^2\left( \Var{y^\cl{l-1}} +\beta^2 I_{n_l}\right). 
\]

Next we consider the computation of $\Var{\frac{\partial\cost}{\partial y^\cl{l-1}}}$.
Note that 
\[
\frac{\partial\cost}{\partial y^\cl{l-1}} = \frac{\partial y^\cl{l}}{\partial y^\cl{l-1}}\frac{\partial\cost}{\partial y^\cl{l}}.
\]
By definition,
\begin{align*}
\frac{\partial y^\cl{l}}{\partial y^\cl{l-1}} = \frac{\partial x^\cl{l}}{\partial y^\cl{l-1}} \frac{\partial y^\cl{l}}{\partial x^\cl{l}}  = \alpha W^\cl{l-1} 
\end{align*}
Thus, $\frac{\partial y^\cl{l}}{\partial y^\cl{l-1}}$ is independent to $\frac{\partial\cost}{\partial y^\cl{l}}$ which only depends on $W^\cl{t}$ for $t = l,\ldots, L$.
Therefore, given that $\EE{\frac{\partial \cost}{\partial y^\cl{L}}} = 0$,
\begin{align*}
\EE{\frac{\partial \cost}{\partial y^\cl{l-1}}} & = \EE{\alpha W^\cl{l-1}\frac{\partial\cost}{\partial y^\cl{l}}} \\
	& = \alpha \EE{W^\cl{l-1}} \EE{ \frac{\partial\cost}{\partial y^\cl{l}}} \\
	& = 0.
\end{align*}
We can now compute the variance of $\frac{\partial\cost}{\partial y^\cl{l-1}}$.
\begin{align*}
\Var{\frac{\partial\cost}{\partial y^\cl{l-1}}} & = \Var{\alpha W^\cl{l-1} \frac{\partial\cost}{\partial y^\cl{l}}} \\
& = \alpha^2 \Var{W^\cl{l-1} \frac{\partial\cost}{\partial y^\cl{l}}} \\
& = \alpha^2 \EE{W^\cl{l-1} \frac{\partial\cost}{\partial y^\cl{l}} \frac{\partial\cost}{\partial y^\cl{l}}^\top W^{\cl{l-1}\top}}
\end{align*}
Note that by assumptions, $\frac{\partial \cost}{\partial y^\cl{L}}$ is independent to $W^\cl{t}$, $t = 1,\ldots,L-1$, with $\Var{\frac{\partial \cost}{\partial y^\cl{L}}} = d_LI$ for some constant $d_L$. Also $\EE{\frac{\partial \cost}{\partial y^\cl{l}}} = 0$.
Similar to the first part of the proof, given that $\frac{\partial \cost}{\partial y^\cl{L}}$ is independent to $W^\cl{t}$, $t = 1,\ldots,L-1$, with $\Var{\frac{\partial \cost}{\partial y^\cl{L}}} = d_LI$ for some constant $d_L$,
one can prove that  $\Var{\frac{\partial \cost}{\partial y^\cl{l}}} = d_l I$ for some constant $d_l$, and thus 
\[
\Var{\frac{\partial \cost}{\partial y^\cl{l-1}}} = \alpha^2(d_ln_l+ \|\EE{\frac{\partial \cost}{\partial y^\cl{l}}}\|_2^2)\sigma_{l-1}^2I_{n_{l-1}} = d_ln_l\sigma_{l-1}^2I_{n_{l-1}} = \alpha^2n_l\sigma_{l-1}^2\Var{\frac{\partial \cost}{\partial y^\cl{l}}}.
\]
\end{proof}

\begin{proof}[Proof of Lemma \ref{lem:algCal}]
\begin{align*}
\EE{Wyy^\top W^\top} & = \EE{Wyy^\top W^\top - W\EE{y}\EE{y}^\top  W^\top} + \EE{W\EE{y}\EE{y}^\top W^\top} \\
	& = \EE{W\Var{y}W^\top} +  \EE{W\EE{y}\EE{y}^\top  W^\top} \\
	& = C\EE{WW^\top}+  \EE{W\EE{y}\EE{y}^\top  W^\top}.
\end{align*}
Here the last equality is due to $\Var{y} = cI_n$. For the first term, consider its $(i,j)$th element
\[
\EE{W_{i:}W_{j:}^\top} = \begin{cases}
0 & \text{if } i\neq j \\
n\sigma^2 & \text{if } i=j
\end{cases},
\]
where $W_{i:}$ is the $i$th row of $W$.
Similarly for the second term $\EE{W\EE{y}\EE{y}^\top  W^\top}$, its $(i,j)$th element 
\[
\EE{W_{i:}\EE{y}\EE{y}^\top W_{j:}^\top} = \EE{y}^\top\EE{W_{i:}^\top W_{j:}}\EE{y}= \begin{cases}
0 & \text{if } i\neq j \\
\sigma^2 \|\EE{y}\|_2^2 & \text{if } i=j
\end{cases}.
\]
 Therefore,
\[\EE{Wyy^\top W^\top} = (Cn\sigma^2+ \sigma^2\|\EE{y}\|_2^2)I_m.\]
\end{proof}

\if0
To simplify the notation, we drop the layer identity $l$ and denote
$y^{(l-1)}$ by $y'$ and $n_{l-1}$ by $n'$.

We first proof that for any element $y_i$ in $y$, $\E[y_i] = 0$.
It is true for layer 0 by assumption. Then

First note that
\begin{align*}
  \E[y_i] &= \E\left[ \sum_{j=1}^{n'} w_{ij} x_j + b_i \right] = \sum_{j=1}^{n'}
  \E[w_{ij}] \E[x_j] + b_i = b_i,\\
\end{align*}
here we used the assumption that $\E[w_{ij}] = 0$ and $W$ is not correlated with $x$

Next observed that, for any $i\ne j$
\begin{align*}
  \cov[y_i, y_j] &= \E[(y_i-\E[y_i])(y_j - \E[y_j])] \\
  &= \E\left[ \left(\sum_{k=1}^{n'} w_{ik} x_k\right) \left(\sum_{k=1}^{n'} w_{jk} x_k\right)\right]\\
  &= \E\left[ \sum_{k_1, k_2} w_{ik_1} w_{jk_2}x_{k_1}x_{k_2}\right]\\
  &= \sum_{k_1, k_2} \E[w_{ik_1}] \E[w_{jk_2}] \E[x_{k_1}x_{k_2}] = 0
\end{align*}
the second last equation is due to the elements in $W$ are not
correlated. Therefore elements in $y$ are not correlated. Since $x = \alpha y +
\beta$, and elements in $W$ are i.i.d., so we have
\begin{align*}
  \var[y_i] &= \var\left[ \sum_{j=1}^{n'} w_{ij} x_j\right] \\
  &= \sum_{j=1}^{n'} \var[w_{ij} x_j] \\
  &= \sum_{j=1}^{n'} \var[w_{ij}] \var[x_j] \\
  &= n'\var[W] (\alpha \var[y'] + \beta)\\
\end{align*}
Therefore we obtain (\ref{eq:var_y}). By the chain rule, we have
\begin{align*}
  \frac{\partial \textrm{cost}}{\partial y'} =
  \frac{\partial x}{\partial y'}
  \frac{\partial y}{\partial x}
  \frac{\partial \textrm{cost}}{\partial y}
  = \alpha W^T \frac{\partial \textrm{cost}}{\partial y}
\end{align*}
Similar to before we can prove that elements in $\frac{\partial
  \textrm{cost}}{\partial y}$ are uncorrelated, and therefore
\begin{align*}
  \var\left[\frac{\partial \textrm{cost}}{\partial y'_i}\right] &=
  \var\left[\sum_{j=1}^n \alpha w_{ji} \frac{\partial \textrm{cost}}{\partial
      y_j}\right] \\
  &= \sum_{j=1}^n \alpha \var[w_{ji}] \var\left[\frac{\partial \textrm{cost}}{\partial
      y_j}\right] \\
  &= \alpha n \var[W] \var\left[\frac{\partial \textrm{cost}}{\partial
      y}\right],
\end{align*}
hence we have proved (\ref{eq:var_gy}).

\end{proof}
\fi
%\newpage
%\section*{Appendix}
%\subsection*{Network Structure in Section 2}
%\begin{center}
%\begin{table}[htb!]
%    \begin{tabular}{|l|l|l|}
%    \hline
%    type        & kernel/stride & output    \\ \hline
%    conv        & 3x3/1         & 28x28x64  \\ \hline
%    conv        & 3x3/1         & 28x28x64  \\ \hline
%    conv        & 3x3/2         & 14x14x128 \\ \hline
%    conv        & 3x3/1         & 14x14x128 \\ \hline
%    conv        & 3x3/1         & 14x14x128 \\ \hline
%    conv        & 3x3/2         & 7x7x256   \\ \hline
%    conv        & 3x3/1         & 7x7x256   \\ \hline
%    conv        & 3x3/1         & 7x7x256   \\ \hline
%    avg pooling & 7x7/1         & 1x1x256   \\ \hline
%    \end{tabular}
%\end{table}
%\end{center}

\end{document}